\tiny\color{gray},
\newtheorem{lemma}{Lemma}[section]
\newtheorem{theorem}{Theorem}[section]
\newtheorem{assumption}{Assumption}
\newtheorem*{unlabelled_claim}{Claim}
\newtheorem*{unlabelled_theorem}{Theorem}
\DeclareMathOperator*{\argmax}{arg\,max}
\DeclareMathOperator{\E}{\mathbb{E}}
\title{
Waste Not, Want Not\thanks{\textit{Definition}: If you use resources wisely and avoid waste, you'll never suffer from a shortage};\\ Recycled Gumbel Noise Improves Consistency in Natural Language Generation}
\author{%
    Damien de Mijolla, \hspace{0.1em} Hannan Saddiq, \hspace{0.1em} Kim Moore \\
    Faculty Science Ltd \\
    \texttt{damien.de-mijolla@faculty.ai}
}
\begin{document}
\maketitle
\begin{abstract}
Consistency in the output of language models is critical for their reliability and practical utility. Due to their training objective, language models learn to model the full space of possible continuations, leading to outputs that can vary significantly in style and content, even for similar or repeated inputs. To address this, we propose a novel decoding algorithm that enhances response consistency across different prompts with no degradation in response quality. By incorporating a latent variable into the next-token sampling process based on the Gumbel reparametrisation trick,  our method outperforms standard sampling by up to 10\% across semantic and stylistic consistency benchmarks. Additionally, our approach integrates seamlessly with existing sampling methods with negligible computational overhead, providing a practical solution for improving the reliability of language model outputs.
\end{abstract}

\section{Introduction}



In recent years, state-of-the-art language models (LMs) have demonstrated remarkable performance across a wide range of benchmarks, often rivaling human capabilities in tasks such as translation, summarization, and question-answering \citep{GPT3, Llama3}. However, these advancements have not always translated into practical usefulness for real-world applications, where reliability and consistency are crucial \citep{ChallengesLLM}.

One of the primary challenges is the inconsistency of these models' responses, which can vary significantly in style, factual accuracy, and tone \citep{OntheOpportunitiesandRisksofFoundationModels}. This inconsistency, a byproduct of the probabilistic nature of language model training, can lead to a range of issues, including reduced trust in outputs, exposure to more diverse failure modes, and less reliable behaviour \citep{AssessingHiddenRiskOfLLM}.

Although traditional methods (e.g the use of random seeds) can be applied to introduce determinism in natural language generation, ensuring identical responses for identical inputs, they do not help ensure similar responses when inputs are similar. In practice, due to the richness of language, input queries can often be reworded in many ways while retaining their meaning. To achieve greater consistency, it is desirable for the model to generate similar responses across all these variations \cite{Ribeiro2018SemanticallyEA}.

In this paper, we investigate whether next-token sampling procedures can be modified to enhance consistency across different prompts. Our main contributions include: 
\begin{enumerate}
    \item We propose a simple, computationally inexpensive sampling procedure that (i) can be applied to any model, (ii) does not require any additional training, and (iii) has negligible impact on inference costs. We also ensure that the probability of any individual response is unchanged and so does not compromise response quality.
    \item We also leverage an auxiliary approach to further improve consistency between model responses using distributional ensembling, which can be applied in conjunction with our aforementioned sampling procedure.
    \item We investigate the performance of our approach against standard sampling across a number of benchmarks covering semantic and stylistic similarity, across a number of different models.
\end{enumerate}

In particular, we highlight that our combined sampler outperforms standard sampling across all benchmark suites and models tested, by up to 10\% in some cases.





\section{Related works}

\paragraph{Decoding approaches} Language model decoding strategies can be broadly classified into two categories: optimization and sampling-based approaches \citep{LanguageModelDecodingAsDirectMetricOptimization}. Optimization-based approaches, such as greedy decoding and beam search \citep{beamsearch, NLPBook}, frame text generation as an optimization problem, searching for sequences that maximize a specific metric such as probability, whereas sampling-based approach incorporate stochasticity into the next-token selection process. Optimization-based approaches are typically perceived as yielding less engaging but more accurate responses and so are often favoured for closed-ended tasks expecting a fixed answer \citep{nucleus_sampling}. However, recent work has put into question the greater accuracy of their responses \citep{EffectOfTemperatureOnSampling}.

In contrast, sampling-based approaches are usually preferred for open-ended tasks, as they typically yield more engaging answers \citep{mirostat, LanguageModelDecodingAsDirectMetricOptimization}. Our proposed method falls within this category. Many existing methods in the literature, such as nucleus sampling and mirostat \citep{nucleus_sampling, mirostat, topksampling}, aim to improve text generation quality by directly modifying the probability distribution from which tokens are sampled. We consider these methods, which directly alter the next-token distribution, as complementary to our approach, which maintains the next-token distribution and instead modifies the joint distribution over responses.

Our approach is methodologically most closely related to methods \cite{ArithmeticSampling, StochasticBeamSampling} which also adjust the joint distribution of sampled responses. However, while these methods aim to maximize response diversity—an advantage when ensembling multiple responses as done in self-consistency voting \citep{wang2023selfconsistency}—our approach is distinct in its focus on minimizing response diversity to achieve more consistent outputs.

\paragraph{Self-Consistency} Language models lack robustness to prompt variations \cite{huang2024trustllm, MeasuringAndImprovingConsistency} and give contradictory responses in such cases, motivating the need for enhanced self-consistency. Self-consistency in language models has been studied from many different angles, but usually with a focus on factual rather than stylistic consistency. Prior work has proposed a number of fine-tuning approaches for increasing self-consistency, including fine-tuning approaches for increasing the ability of language models to respond consistently to paraphrases of questions \cite{MeasuringAndImprovingConsistency, ContrastiveInstructionTuning}, and approaches for correcting model contradictions using a factor graph over beliefs \cite{EnhancingSelfConsistency}.


Our approach is methodologically orthogonal to previous approaches for enhancing self-consistency. Previous work has relied on fine-tuning which not only is more cumbersome to implement but also modifies the raw next-token probabilities, potentially affecting responses in unforeseen ways or contributing to catastrophic forgetting. 

Since our approach only modifies the joint distribution over responses without modifying the next-token probability distribution, it does not suffer from the same issues, and comes with principled guarantees around maintaining the model's original response style and quality. Additionally, it enhances \textit{all} aspects of self-consistency, not just factual consistency of responses.

\section{Problem statement}

\begin{figure*}
    \centering
    \includegraphics[width=0.98\linewidth]{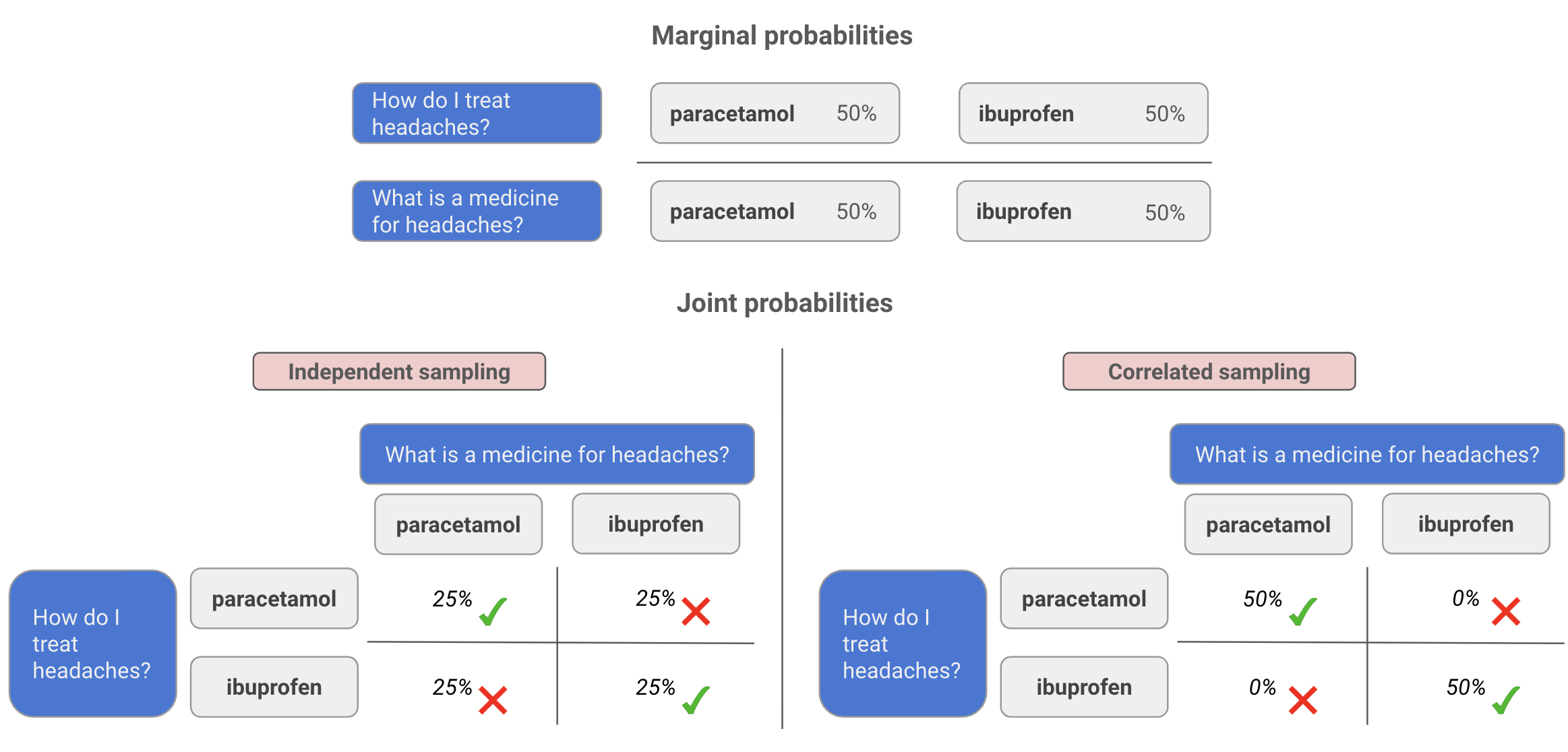}
    \caption{Motivating toy example highlighting the aim of our approach. Even when language models yield similar probability distributions over responses (top), responses sampled independently (bottom left) can be inconsistent or contradictory due to the inherent stochasticity of sampling. By generating responses in a correlated manner (bottom right) it is possible to alleviate inconsistencies across responses while still respecting the marginal probabilities of each response. In this paper we propose, Gumbel Consistent Sampling, an approach for increasing response consistency through drawing correlated responses, by conditioning all responses on a shared latent variable, that is robust to differences between probability distributions over responses.}
    \label{fig:motivating_figure}
\end{figure*}

Let $X$ be a language model prompt composed of a sequence of tokens drawn from a vocabulary of size $N_v$, and let $\pi_\theta$ be a language model trained on the task of next-token-prediction. For the remainder of the paper we denote a forward pass through the language model by $h_{t} = \pi_\theta(X, Y_{1:t-1})$ where $h_{t} \in \Delta^{N_v-1}$ represents a probability distribution over the token vocabulary, with $\Delta^{N_v-1}$ denoting the $(N_v-1)$-dimensional probability simplex. The sequence $Y_{1:T}$ represents the full response obtained by auto-regressively applying the language model with the next token at each step sampled from the categorical distribution parameterized by the model, $Y_t \sim \text{Cat}(h_t)$.  In what follows, we use a subscript to represent position in a sequence, and a superscript to represent the token index. So for example, $h_t^i$ represents the probability of sampling token $i$ at position $t$.

Suppose that $U$ is a different prompt that is semantically similar to $X$ for which we generate a response $V = V_{1:M}$. Motivated by the inconsistency of LM responses, our goal is to modify the LM sampling procedure in a way that increases the similarity between responses $Y$ and $V$ according to some yet-to-be-specified notion of similarity. Furthermore, we focus on sampling approaches that modify the joint probability of responses $p(Y,V)$ without affecting the marginal probability of individual responses, $p(Y)$ and $p(V)$, to guarantee that quality of the original responses is maintained.

\section{Approach}\label{section:approach}


Our proposed sampling approach, motivated in Figure \ref{fig:motivating_figure}, modifies the joint probability distribution over responses by introducing a latent variable $g$ to the sampling process. Conditioning the generation of distinct responses on a common realisation of this latent variable introduces a statistical dependency between them. Generating responses with greater similarity can then be straightforwardly done by conditioning the generation of all responses on a common realisation of the latent variable, that is to say to sample $Y \sim p(Y|X,g)$ and $V \sim p(V|U,g)$. 


To ensure the efficacy of the approach, we design the latent variable in such a way that conditioning responses on a common value of the latent variable makes responses as similar as possible. To ensure the preservation of the probability distribution parameterized by the language model, we sample the latent variable from a probability distribution $g \sim p(g)$ such that marginalising over the latent variable recovers the original distribution over responses, $\mathbb{E}_g[p(Y|g)]= p(Y)$.

To construct a latent variable with the above properties, we employ the reparametrization trick for categorical distributions. Introduced for normal distributions in \cite{VAE} and extended to categorical distributions in \cite{maddison2014astarsamp, ConcreteDistribution, GumbelMaxTrick}, the reparametrization trick is a procedure that refactors the sampling from a distribution into a deterministic function of the parameters and a draw from some independent noise with a fixed distribution. For a categorical distribution with parameters $p^1,...,p^{N_v}$, this can be cast as first drawing random noise $g=(g^1,...,g^{N_v})$ where each $g^i \sim G(0,1)$ is independently drawn from the Gumbel distribution \citep{Gumbel} and selecting a category $k$ according to $k = \argmax_i (\log p^i + g^i)$.

\begin{theorem}\label{thm:gumbeljointprob}
Suppose we have two different categorical distributions parametrized by $p^1,...,p^{N_v}$ and $q^1,...,q^{N_v}$. Define a joint distribution over pairs of categories $(Y,V)$ by defining
$$
    Y = \argmax_i(\log p^i + g^i),\, V = \argmax_i(\log q^i + g^i),
$$
where $g^1,...,g^{N_v}\sim \text{G}(0,1)$ are independent. We have that 
$$P(Y=k, V=k) =\textstyle\frac{p^kq^k}{p^kq^k + \sum_{i\ne k}\max\{p^iq^k,q^ip^k\}}.$$
\end{theorem}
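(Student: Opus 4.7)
The plan is to compute $P(Y=k, V=k)$ by conditioning on $g^k$ and then marginalizing it out using the Gumbel CDF. First I would rewrite the joint event in terms of inequalities on the noise variables: requiring both $\argmax$'s to equal $k$ means that for every $i \ne k$,
$$
g^i \;\le\; g^k + \log(p^k/p^i) \quad\text{and}\quad g^i \;\le\; g^k + \log(q^k/q^i),
$$
which combines into the single bound $g^i \le g^k + \log m_i$ with $m_i := \min\{p^k/p^i,\,q^k/q^i\}$.

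Next I would fix $g^k = z$, exploit independence of the remaining $g^i$, and apply the Gumbel CDF $F(x) = \exp(-e^{-x})$ to each factor. Each factor becomes $\exp(-e^{-z}/m_i)$, and taking the product turns the conditional probability into $\exp\!\bigl(-e^{-z}\sum_{i\ne k} 1/m_i\bigr)$. The key algebraic identity I would then use is $1/\min\{a,b\} = \max\{1/a,1/b\}$, which converts the sum into
$$
\sum_{i\ne k} \max\{p^i/p^k,\, q^i/q^k\} \;=\; \frac{1}{p^k q^k}\sum_{i\ne k}\max\{p^i q^k,\, q^i p^k\} \;=:\; \frac{S}{p^k q^k}.
$$

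Finally I would integrate against the Gumbel density $f(z) = e^{-z}\exp(-e^{-z})$ of $g^k$, giving
$$
P(Y=k,V=k) \;=\; \int_{-\infty}^{\infty} e^{-z}\exp\!\Bigl(-e^{-z}\bigl(1 + S/(p^k q^k)\bigr)\Bigr)\,dz,
$$
and apply the substitution $u = e^{-z}$ to collapse the integral to $(1 + S/(p^k q^k))^{-1}$. Multiplying numerator and denominator by $p^k q^k$ yields the claimed expression.

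I do not anticipate a serious obstacle: the computation is essentially the classical Gumbel-max argument extended to two coupled $\argmax$'s. The one place demanding care is the simultaneous handling of the two inequalities per index $i$, where the reduction to $\min$ (and then to $\max$ after inversion) needs to be done cleanly; everything after that is a standard Gumbel integral.
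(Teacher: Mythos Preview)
Your proposal is correct and follows essentially the same approach as the paper's proof: condition on $g^k$, use the Gumbel CDF to handle the independent $g^i$'s under the combined bound $g^i \le g^k + \log m_i$, then integrate out $g^k$ against the Gumbel density to obtain $(1+\sum_{i\ne k}1/m_i)^{-1}$ and simplify via $1/\min\{a,b\}=\max\{1/a,1/b\}$. The only cosmetic difference is that the paper recognizes the antiderivative directly whereas you substitute $u=e^{-z}$; the argument is otherwise identical.
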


Theorem \ref{thm:gumbeljointprob} (proved in Appendix \ref{appendix:joint_gumbel_proof}) shows that interpreting the Gumbel noise as a latent variable and conditioning sampling events on the same realisation of this latent variable increases the probability of selecting the same category with both distributions compared to sampling from each categorical distribution independently, with identical sampling outcomes in the limit where $p$ and $q$ become identical.

Since generating a response using a LM consists of successive draws from categorical distributions, the above idea can be applied to language modelling in order to increase the token overlap across distinct responses. Indeed, we can generate ahead of time a sequence of independent Gumbel latent vectors, $g_{1:t}$, one for each position in the sequence up to the maximum sequence length, and sample each token using the Gumbel latent vector assigned to that position in the sequence when generating a response. That is to say, drawing $Y_t \sim p(Y_t|h_{y,t}, g_t)$ and $V_t  \sim p(V_t|h_{v,t}, g_t)$, where here we denote by $h_{y,t}$ and $h_{v,t}$, the next-token probabilities obtained by running the language model on the context-up-to-now (i.e. $h_{y,t} = \pi_\theta(X, Y_{1:t-1})$, $h_{v,t} = \pi_\theta(U, V_{1:t-1})$ ). We refer to the above approach as \textbf{Gumbel Consistency Sampling, GCS}.

This sequential Gumbel sampling approach increases similarity of responses by increasing the rate at which identical tokens are generated at fixed positions in the sequence $p(Y_i=k,V_i=k)$ but has the limitation of not increasing the co-occurrence across sequence positions $p(Y_j=k,V_i=k)$. We expect that two similar responses are likely to contain some of the same tokens, but likely in different positions, so it would be advantageous for our final sampling approach to reflect this.


Introducing such an inter-position correlation in sampling outcomes across sequences is made challenging by the requirement of conditional independence between sampling steps. Indeed, to respect the LM's probability distribution, it is necessary for sequential sampling steps to be independent of each other, i.e. for $p(Y_{t+1}|X, Y_{1:t})=p(Y_{t+1}|h_{t+1})=\text{Cat}(Y_{t+1};h_{t+1})$ which prevents the direct reuse of Gumbel samples across sequence positions. 

\begin{theorem} \label{theorem:gumbel_recycling_validity}
    Consider a sequence of tokens $Y_{1:T}$ generated auto-regressively according to the following update rule, where $k := \argmax_j (g_{t}^j + \log h_{t}^j)$, $Q(\cdot)$ is the quantile function for the $G(0,1)$ distribution and $\pi_{\theta}(\cdot)$ is a language model:
    \begin{align*}
        g_1 &\sim G(0,1)\\
        h_{t+1} &= \pi_{\theta}(X, Y_{1:t})\\
        g_{t+1}^k \mid g_t, h_t &\sim G(0,1)\\
        g_{t+1}^i \mid g_t, h_t &= \textstyle Q\left(\frac{Q^{-1}(g_{t}^{i})}{Q^{-1}(g_{t}^{k} + \log h_{t}^{k} - \log h_{t}^i)}\right), \quad \text{for }i \neq k\\
    Y_{t+1} &= \argmax_j \left(\log h_{t+1}^j + g^j_{t+1}\right)
    \end{align*}

With this update procedure, the probability distribution over a given token conditioned on preceding tokens is 
\begin{equation*}
    p(Y_{t+1} \mid X, Y_{1:t}) = \text{Cat}(Y_{t+1}; h_{t+1})
\end{equation*}
\end{theorem}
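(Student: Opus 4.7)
The plan is to prove by induction on $t$ the stronger statement that, conditional on the sampled history $(X, Y_{1:t})$, the entries of $g_{t+1}$ are mutually independent $G(0,1)$ random variables. The claimed identity $p(Y_{t+1} \mid X, Y_{1:t}) = \text{Cat}(Y_{t+1}; h_{t+1})$ then follows at once from the Gumbel-max trick, because $h_{t+1}$ is a deterministic function of $(X, Y_{1:t})$. The base case $t=0$ is immediate: $g_1$ is drawn iid from $G(0,1)$ by construction, so $Y_1 \mid X \sim \text{Cat}(h_1)$.

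For the inductive step, the key tool is the classical Gumbel-max decomposition: if $g^1,\ldots,g^{N_v}$ are iid $G(0,1)$, $K := \argmax_j(g^j + \log h_t^j)$ and $M := g^K + \log h_t^K$, then (i) $K \sim \text{Cat}(h_t)$ and $M \sim G(0,1)$ are independent, and (ii) conditional on $(K,M) = (k,m)$, the coordinates $\{g^i\}_{i \neq k}$ are mutually independent with $g^i \sim G(0,1)$ truncated to $(-\infty,\, m - \log h_t^i)$. Applied under the inductive hypothesis, this pins down the conditional law of $g_t$ given $(X, Y_{1:t})$: writing $k = Y_t$ and $M_t = g_t^k + \log h_t^k$, we have $M_t \sim G(0,1)$ independent of $k$, and the $\{g_t^i\}_{i \neq k}$ are conditionally independent truncated Gumbels given $M_t$.

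I then appeal to the standard inverse-CDF identity: if $Z$ has CDF $F$ conditioned to $Z < a$, then $F(Z)/F(a)$ is Uniform on $[0,1]$, \emph{independently of $a$}. Since $Q^{-1}$ is the $G(0,1)$ CDF, this shows that each $U_i := Q^{-1}(g_t^i)/Q^{-1}(M_t - \log h_t^i)$ is uniform on $[0,1]$, that the $\{U_i\}_{i \neq k}$ are mutually independent, and that they are jointly independent of $M_t$. The update rule sets $g_{t+1}^i = Q(U_i)$ for $i \neq k$, which is therefore marginally $G(0,1)$, and the joint vector $\{g_{t+1}^i\}_{i \neq k}$ is iid $G(0,1)$ by the preceding independence. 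Combined with the freshly drawn $g_{t+1}^k \sim G(0,1)$, which is independent of everything else by construction, this closes the induction, and Gumbel-max applied to $h_{t+1}$ delivers the conclusion.

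The main obstacle I anticipate is the conditional-independence bookkeeping: the non-argmax entries $\{g_t^i\}_{i \neq k}$ are \emph{not} marginally independent once one conditions on $Y_t = k$, they are only conditionally independent \emph{given $M_t$}. The update rule is carefully engineered so that the $M_t$-dependent truncation bound is exactly what is needed to renormalise each entry into a uniform variable, and hence into a fresh $G(0,1)$ sample that is independent of the history. Writing this out, together with an explicit derivation of the two Gumbel-max decomposition facts above, is what the proof essentially reduces to.
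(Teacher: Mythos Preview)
Your proposal is correct and takes a genuinely different route from the paper. The paper proceeds by direct computation: it writes $p(g_{t+1}^i < a \mid X, Y_{1:t})$ as an integral over $g_t$, applies Bayes' rule to expand $p(g_t \mid X, Y_{1:t})$ in terms of the prior $G(g_t;0,1)$ and the likelihood $p(Y_t \mid h_t, g_t) = \prod_{p\neq k}\mathbf{1}\{g_t^p + \log h_t^p < g_t^k + \log h_t^k\}$, rewrites everything in terms of $U_t^i := Q^{-1}(g_t^i)$ via the identity $Q^{-1}(x+c) = (Q^{-1}(x))^{e^{-c}}$, and then evaluates the resulting integral over a product of uniforms via a bespoke lemma. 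You instead invoke the structural Gumbel-max decomposition (the argmax and the max are independent, and the non-argmax coordinates are conditionally independent truncated Gumbels given the max), after which the update rule is recognised as the exact inverse-CDF renormalisation $U_i = F(g_t^i)/F(\text{truncation bound})$ that turns each truncated coordinate back into a fresh uniform, hence a fresh $G(0,1)$ via $Q$.

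Your approach is cleaner on two counts. First, it explains \emph{why} the update rule takes the form it does: the denominator $Q^{-1}(g_t^k + \log h_t^k - \log h_t^i)$ is precisely the conditional CDF mass at the truncation point, so the ratio is the probability-integral transform of a truncated variable. Second, your conditional-independence bookkeeping (the $U_i$ are conditionally independent given $M_t$, each with a law not depending on $M_t$, hence jointly independent of $M_t$ and of each other) delivers the \emph{joint} iid $G(0,1)$ law of $g_{t+1}$ directly; the paper's argument as written only computes the marginal CDF $p(g_{t+1}^i < a \mid X, Y_{1:t})$ for each $i$ separately and then asserts the joint, so your treatment is actually more complete on that point. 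The paper's route has the modest advantage of being self-contained, not presupposing the reader knows the top-down Gumbel decomposition.
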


In Theorem \ref{theorem:gumbel_recycling_validity} we introduce a procedure for recycling a Gumbel vector after applying the Gumbel reparameterization trick. We prove in Appendix \ref{appendix:gumbel_recylcing_validity} that this procedure is functionally equivalent to independently sampling each token from the true model probability distribution. This means that repeated application of the Gumbel reparametrization trick with this recycling procedure yields sequences that are indistinguishable from those obtained by independent sampling at each step from the model’s categorical distribution.

This property enables the generation of highly correlated responses while preserving adherence to the model’s probability distribution. By sampling a single Gumbel vector and reusing it across all generated responses, each response remains faithful to the model’s predicted probabilities while also being inherently correlated due to the shared Gumbel noise. Moreover, because the Gumbel noise remains highly similar before and after recycling, responses exhibit strong inter-position correlations across different sequence positions.

\begin{algorithm}
\caption{Gumbel Consistency Sampling with Recycling (GCSwR)}
\label{alg:gumbel_recycling}
\begin{algorithmic}[1]
\REQUIRE Context $X$, sequence length $T$, language model parameters $\theta$
\ENSURE Generated token sequence $Y_{1:T}$

\STATE Initialize $g \sim G(0,1) \in \mathbb{R}^{N_{\text{vocab}} \times T}$ and $c = [0,0,  \dots, 0] \in \mathbb{R}^{N_{\text{vocab}}}$

\FOR{$t = 1$ to $T$}
    \STATE $h_{t} \gets \pi_{\theta}(X, Y_{1:t-1})$
    \STATE $k \gets \arg\max_j \left( g_{c_j}^j + \log h_{t}^j \right)$
    \STATE $Y_{t} \gets k$
    \STATE $c_k \gets c_k+1$
    \FOR{each $i \neq k$}
        \STATE $g_{c_i}^i \gets Q\left( \frac{Q^{-1}(g_{c_i}^i)}{Q^{-1}\left(g_{c_k}^k + \log h_{t}^k - \log h_{t}^i\right)} \right)$   \COMMENT{$Q(\cdot)$: Gumbel quantile function}
    \ENDFOR
\ENDFOR

\RETURN $Y_{1:T}$
\end{algorithmic}
\end{algorithm}

We explicitly present the overall procedure in Algorithm \ref{alg:gumbel_recycling} as well as an illustrative Python implementation in Appendix \ref{python_implementation}.  We refer to this generation approach as \textbf{Gumbel Consistency Sampling with Recycling, (GCSwR)}. At each sequence position, the algorithm resamples a new Gumbel noise value for the position corresponding to the chosen token while recycling a rescaled version of the existing Gumbel values for all other positions. To prevent divergences among responses due to resampling, we precompute and store Gumbel noise resamplings for each token in the vocabulary. This allows the same noise values to be reused across different responses. In the algorithm, this process is managed using the counter variable $c$.

The standard procedure for autoregressive token sampling, which is equivalent to independent sampling of a new Gumbel latent vector for every sequence position and every sequence, acts as a baseline for subsequent experiments, and is denoted as \textbf{Independent Sampling, (IS)}.

\section{Ensembling semantically similar responses}

A complementary approach to enhance consistency between responses given semantically similar prompts is to reduce the impact of semantically irrelevant prompt attributes on the next-token probability distributions, which can be achieved by increasing the similarity between the sampling distributions.

In our experiments, we explore sampling tokens from an ensembled probability distribution over semantically equivalent prompts as a means of minimising impact of semantically irrelevant prompt variations on responses. Specifically, we generate semantically equivalent variations of the user prompt by asking a separate LM (\texttt{gpt-4o mini}) to rephrase the prompt. We then run the target LM separately on all of the prompts, producing a set $\{P_i\}$ of next-token probability distributions. We then sample from an ensembled distribution, ensembled using the following formula:
\begin{align}
Q^j = \frac{1}{Z} \prod_{i=1}^n (P_i^j)^{\frac{1}{n}}
\end{align}
where Z is the normalisation constant that ensures $Q$ defines a valid probability distribution function:
\[
Z = \sum_{j} \prod_{i=1}^n (P_i^j)^{\frac{1}{n}}
\]
This formula corresponds to selecting the categorical distribution that minimizes the average forward-KL divergence over all next-token probability distributions (see Appendix \ref{appendix:kl_divergence_proof}). We found that direct averaging (which can equivalently be shown to minimize the reverse-KL distribution) tended to generate worse-quality responses due to at times sampling tokens that were only high-probability for a subset of question rewordings. 


Note that, contrary to our proposed Gumbel sampling approach, ensembling comes at a cost of additional inference-time compute and also modifies the language model probability distributions. We highlight that ensembling can be applied in conjunction with any of the three samplers discussed in section \ref{section:approach}, and we investigate the performance of each sampler with and without ensembling in our experiments.

\section{Experiments}

In our experiments, we empirically demonstrate the utility and limitations of GCS and GCSwR. We begin by quantifying the utility of the procedure for enhancing semantic similarity of responses, and highlight a number of stylistic dimensions of text along which Gumbel sampling improves consistency. Details for reproducing experiments are shown in Appendix \ref{appendix:Experiment_detail}.


\subsection{Semantic similarity}\label{section:semantic_similarity}

We start by quantifying the improvement in the semantic similarity between responses for semantically equivalent queries by using our Gumbel sampling variants (GCS and GCSwR). To measure semantic similarity, we use $\text{E5}_{\text{mistral-7b}}$, a specialised state-of-the-art model  trained specifically on the task of semantic similarity \citep{semantic_similarity_model}.

We create semantically equivalent pairs of questions for evaluation by randomly sampling 300 questions from the Alpaca dataset \citep{alpaca} — a popular human-preference dataset - and rephrasing them using \texttt{gpt-4o mini}. We then generate responses to the original and rephrased version of each question using \texttt{Meta-Llama-3-8B-Instruct}, \texttt{Meta-Llama-3-8B}, \texttt{Mistral-7B-v0.1}, \texttt{Llama-2-7b-chat-hf}~\citep{llama3modelcard,Llama2,mistral}. In all cases we sample from the raw unmodified next-token probabilities predicted by the language models (i.e. temperature of 1) and for Gumbel sampling, we resample the Gumbel latent vector for each pair of questions such that responses are correlated within but not between pairs.

\begin{table*}[t]
    \centering
    \label{tab:semantic_results}
    \begin{tabular}{llcccc}
        \toprule
        \textbf{Model} & \textbf{Sampler} & \textbf{Without Ensembling} & \textbf{With Ensembling} \\
        \midrule
        \multirow{3}{*}{Llama2 Chat}
                            & IS & 86.34$\pm$0.07  & 87.56$\pm$0.29 \\
                            & GCS & 88.28$\pm$0.10  & 90.26$\pm$0.27 \\
                            & GCSwR & \textbf{88.61$\pm$0.15}  & \textbf{90.38$\pm$0.25} \\
    \midrule
    \multirow{3}{*}{Mistral}
                            & IS & 72.00$\pm$0.27  & 72.34$\pm$0.93 \\
                            & GCS & 78.55$\pm$0.22  & 81.17$\pm$0.77 \\
                            & GCSwR & \textbf{80.94$\pm$1.05}  & \textbf{82.74$\pm$0.81} \\
    \midrule
    \multirow{3}{*}{Llama3 Instruct}
                            & IS & 85.61$\pm$0.18  & 86.90$\pm$0.16 \\
                            & GCS & 86.81$\pm$0.46  & 89.01$\pm$0.35 \\
                            & GCSwR & \textbf{87.37$\pm$0.27}  & \textbf{89.68$\pm$0.08} \\
    \midrule
    \multirow{3}{*}{Llama3 Base}
                            & IS & 71.23$\pm$0.41  & 71.46$\pm$0.70 \\
                            & GCS & 76.68$\pm$0.80  & 78.71$\pm$0.82 \\
                            & GCSwR & \textbf{80.10$\pm$0.80}  & \textbf{82.04$\pm$0.81} \\

        \bottomrule
    \end{tabular}
    \captionsetup{width=.8\linewidth}
    \caption{Average semantic similarity results by sampler type with and without our ensembling approach as measured by $\text{E5}_{\text{mistral-7b}}$. Scores shown as mean$\pm$std.err with std.err obtained from 3 independent runs. Bold indicates highest scores for each model in both ensembling categories.}
    \label{tab:semantic_results}
\end{table*}

The aggregated results, shown in Table \ref{tab:semantic_results}, demonstrate that the most performant sampling scheme tested (GCSwR with ensembling) significantly increases response similarity to semantically equivalent questions across all models considered, by more than 10\% when compared to the baseline in some cases. We note more pronounced enhancements from Gumbel sampling for unaligned models like Mistral and Llama3 Base, which we hypothesise is caused by their lower base semantic similarity compared to their instruction fine-tuned counterparts.

The above trends appear to be consistent across different choices of semantic similarity metric which we show in Appendix \ref{appendix:OtherMetricsSimilarity}, where we reproduce results using the Jaccard similarity, a simple token overlap metric, and using \texttt{all-mpnet-base-v2}, the semantic similarity model recommended by the popular \texttt{sentencetransformer} repository \citep{sentencebert}. In both cases, we find relative performances between approaches to be consistent with those quoted in the main paper body.

\subsection{Semantic similarity as a function of temperature}


Next, we investigate how the effectiveness of GCSwR varies with sampling temperature. We compare the semantic similarity metric on the Alpaca dataset as a function of temperature in Figure \ref{fig:vs_temperature} with IS as a baseline, without using ensembling in both cases. GCSwR improves the semantic consistency of responses across all temperatures, except temperature 0, where the model probabilities with and without GWSwR become identical due to the fully deterministic nature of model outputs at this temperature\footnote{We note that responses can still differ under greedy decoding if several tokens are tied for maximum probability. In experiments this occurred a non-negligible amount of times due to the limited numerical precision of bfloat16.} Example responses for Llama3 models at temperature 0.8 can be found in Appendix \ref{appendix:sampling_examples}.

It is also interesting to note that although GCSwR improves self-consistency at all non-zero temperatures, the highest self-consistency achieved is with greedy decoding (i.e. temperature 0) which is where both approaches behave identically. However, we caution that this result does not imply that greedy decoding will always be preferable to higher-temperature Gumbel sampling. Using greedy decoding is widely considered to decrease the quality of responses across a number of important dimensions and so model providers typically use non-zero default temperatures \citep{mirostat, LanguageModelDecodingAsDirectMetricOptimization, TradingOffDiversityAndQualityInNaturalLanguageGeneration}. Gumbel sampling offers a way of increasing the consistency of responses without the negative side-effects associated with excessively lowering the sampling temperature. We also note that using Gumbel sampling is much more effective at increasing self-consistency of responses than decreasing temperature, with temperatures needing to be roughly halved in order to match the benefits of using Gumbel consistency sampling.

\begin{figure*}[t]
  \centering
  \captionsetup{width=.8\linewidth}
\includegraphics[width=\textwidth]{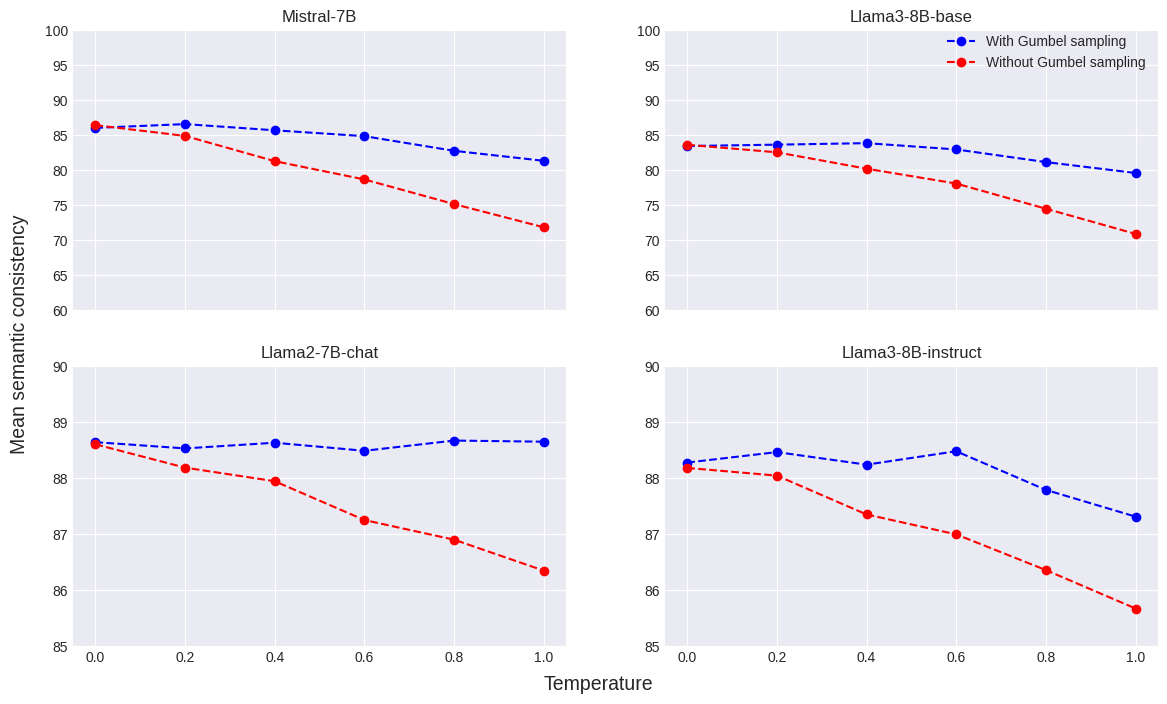}
\caption{Mean semantic consistency between responses to paraphrased questions as a function of temperature, comparing independent sampling (IS) against Gumbel Consistency Sampling with Recycling (GCSwR).}
  \label{fig:vs_temperature}
\end{figure*}



\subsection{Stylistic similarity}

\begin{table*}[t]
    \centering
    \begin{tabular}{lccc}
        \toprule
        \textbf{Dataset} & \textbf{Stylistic Dimension} & \multicolumn{2}{c}{\textbf{Sampler}} \\ 
        \cmidrule{3-4}
        & & \textbf{IS} & \textbf{GCSwR} \\ 
        \midrule
        \textbf{Code-Alpaca} & Is Python & 0.67 & \textbf{0.73} \\ 
        & Is JavaScript & 0.78 & \textbf{0.84} \\ 
        & Is C++ & 0.92 & \textbf{0.94} \\ 
        & Contains Code Snippet & 0.71 & \textbf{0.81} \\ 
        & Answers Directly & 0.50 & \textbf{0.73} \\ 
        & Contains Comments & 0.71 & \textbf{0.80} \\ 
        \midrule
        \textbf{Aleatoric-List} & Does Not Use Bullets & 0.75 & \textbf{0.82} \\ 
        & Uses Numerical Bullets & 0.82 & \textbf{0.87} \\ 
        & Terseness & 0.50 & \textbf{0.64} \\ 
        \bottomrule
    \end{tabular}
    \captionsetup{width=.9\linewidth}
    \caption{Comparison of Gumbel consistency sampling with recycling (GCSwR) vs. independent sampling (IS) on Stylistic Consistency}
    \label{tab:style}
\end{table*}

In this section, we study Gumbel consistency sampling's ability to enhance stylistic consistency across several distinct stylistic dimensions, evaluating GCSwR without ensembling using \texttt{Mistral-7B-v0.1} \citep{mistral}.

We conduct our experiments on two datasets: Code-Alpaca and Aleatoric-List. The Code-Alpaca dataset \citep{codealpaca} consists of coding-related questions, from which we select a subset of 20 random questions that are agnostic to programming languages. For this dataset, we assess stylistic consistency based on several factors: whether the response contains a code snippet, whether the response starts directly with the code snippet or begins with freeform text, whether the code snippet includes comments, and the programming language used in the response (such as Python, JavaScript, or C++).

The second dataset, Aleatoric-List, is a synthetic dataset we created containing 20 questions that ask for five different items fitting a specific category. An illustrative example question is "Give me the names of five capital cities in Europe." For this dataset, we evaluate stylistic consistency based on whether the answer is terse, whether it contains bullet points, and whether these bullet points are numerical.

To evaluate stylistic consistency along each dimension, we begin by generating 100 Gumbel latent vectors. Then, for each Gumbel vector, we generate a response to all questions in the dataset which we classify along each of the stylistic dimensions through prompting \texttt{gpt-4o mini} (with prompts shown in Appendix \ref{appendix:cls_prompts}). For each factor, we then define the stylistic consistency as the probability that responses to two randomly selected questions share the same label, denoted as $p_{repeat}$. We then compare this probability with the equivalent probability when the responses are generated with our independent sampling baseline (IS).

Let $Z$ be a Bernoulli random variable that denotes whether a randomly sampled response is labelled with a given stylistic dimension, $p(Z = 1) = p$.  For IS, $p_{repeat}= p^2+(1-p)^2$. However, for GCS and GCSwR, $p_{repeat}= \mathbb{E}_{g}[p_g^2+(1-p_g)^2]$ where $p_g$ denotes the probability of a randomly sampled response generated using Gumbel latent vector $g$ taking value $Z=1$. These expressions follow directly from the conditional independence of responses generated with a common initial Gumbel latent vector $g$ and generated independently, and additionally from marginalisation over initial Gumbel latent vectors $g$.

Although the estimator $\hat{p}= \frac{1}{n} \sum_{i=1}^n Z_i$ is an unbiased estimator of $p$, $\hat{p}^2+(1-\hat{p})^2$ yields a biased estimator of $p^2+(1-p)^2$. To correct for this bias, we use the following estimator $\frac{N}{N-1} (\hat{p}^2 + (1-\hat{p})^2) - \frac{1}{N-1}
$ which we show in Appendix \ref{appendix:unbiased_p_repeat} to be unbiased.

We show, in Table \ref{tab:style}, the results of this experiment, using Mistral-7B to generate responses. Across all stylistic dimensions considered, using GCSwR increases the frequency with which generated responses follow a common style. For many factors, the increase is significant (>10\%), showing that Gumbel consistency sampling can have an appreciable impact on style consistency. 

\section{Conclusion}

We have introduced Gumbel consistency sampling, a straightforward and computationally inexpensive sampling approach for increasing consistency amongst model responses. The method requires no additional fine-tuning, additional language model calls or apriori knowledge of what prompts will be used, and guarantees responses indistinguishable to those obtained using standard sampling at the level of individual responses. The approach enhances consistency by sampling responses in a correlated manner through the introduction of a latent variable, in a way that increases the token overlap across responses. In our experiments, we find that this approach is not only able to enhance semantic similarity between responses but also stylistic similarity. These results showcase how Gumbel consistency sampling offers a principled quick and easy way of enhancing language model 
consistency.


Future work could extend the Gumbel consistency sampling to imposing local rather than global correlation to responses. Currently, all responses are globally coupled due to dependence on the same global latent variable, which makes localised adjustments to model behaviour impossible. However, the framework could easily enable for latent variables to be varied locally depending on question specifics, which would enable finer-grain control of model behaviour and could increase the overall response diversity. Another, promising direction for extending the work could be to treat the Gumbel noise as a learnable task-specific parameter. Such an approach may be especially useful for building stronger model safeguards while preserving general utility.

\section{Limitations}

Increasing the consistency amongst a set of responses necessarily decreases their diversity making our proposed sampling approach unsuitable for use cases requiring high response diversity. In particular, using the proposed sampling approach, leads to fully deterministic sampling where responses will always be identical for identical input prompts.

More generally, use of the sampling approach is likely to lead to responses favoring specific topics and figures of speech over others. This arises due to the specific Gumbel noise value utilised during text generation encoding relative preferences between tokens and is not inherently a weakness of the approach. Indeed, analogously every members of the human population also exhibit their own individual preferences and mannerisms.

Finally, it is important to emphasize that while Gumbel consistency sampling enhances consistency amongst responses it does not guarantee it. Responses, generated using the approach, may still lack self-consistency making the approach on its own inadequate for use cases requiring perfect consistency.

\bibliography{custom}
\onecolumn
\appendix

\section{Proof for Theorem \ref{thm:gumbeljointprob}}\label{appendix:joint_gumbel_proof}
\begin{unlabelled_theorem}
Suppose we have two different categorical distributions parametrised by $p^1,...,p^{N_v}$ and $q^1,...,q^{N_v}$. Define a joint distribution over pairs of categories $(Y,V)$ by defining
\begin{equation}
    Y = \argmax_i(\log p^i + g^i),\, V = \argmax_i(\log q^i + g^i),
\end{equation}
where $g^1,...,g^{N_v}\sim \text{G}(0,1)$ are independent. We have that 
$$P(Y=k, V=k) =\frac{p^kq^k}{p^kq^k + \sum_{i\ne k}\max\{p^iq^k,q^ip^k\}}.$$
\end{unlabelled_theorem}
\begin{proof}
    If 
    \begin{equation*}
        k = \argmax_i \{\log p^i + g^i\} = \argmax_i\{\log q^i + g^i\},
    \end{equation*}
    then we must have that for all $i\ne k$,
    \begin{equation*}
        g^i < \log p^k + g^k - \log p^i \text{ and } g^i < \log q^k + g^k - \log q^i,
    \end{equation*}
    i.e., 
    \begin{equation*}
        g^i < g^k + \min\{\log(p^k/p^i),\log(q^k/q^i)\} \quad \forall i\ne k.
    \end{equation*}
    Denoting by $F$ the CDF of the Gumbel distribution, we can write
    
    \begin{equation*}
        P(Y=k,V=k|g^k) = \prod_{i\ne k} F(g^k+\min\{\log p^k/p^i, \log q^k/ q^i\}).
    \end{equation*}

    Denoting the PDF of the Gumbel distribution by $f$ and marginalising we deduce that
    \begin{equation*}
        P(Y=k,V=k) = \int_{-\infty}^\infty \prod_{i\ne k} F(g+\min\{\log p^k/p^i, \log q^k/ q^i\})f(g) dg.
    \end{equation*}

    Expanding, we can write this as
    \begin{align*}
        P(Y=k,V=k) &= \int \prod_{i\ne k} \exp\{-\exp\{-g-\min\{\log p^k/p^i, \log q^k/ q^i\}\}\}\exp\{-g-\exp(-g)\}dg \\ 
        &= \int \exp\{-g-\exp(-g)-\sum_{i\ne k}\exp\{-g - \min\{\log p^k/p^i, \log q^k/ q^i\}\} \}dg \\ 
        &=\int \exp\{-g-\exp(-g)(1+\sum_{i\ne k}\exp\{- \min\{\log p^k/p^i, \log q^k/ q^i\}\} \}dg.
    \end{align*}
    Recall that
    \begin{equation*}
        \frac{d}{dx} e^{e^x} = e^xe^{e^x} = e^{x+e^x},
    \end{equation*}
    and so for any $A$ independent of $x$ we have that
    \begin{equation*}
        \frac{d}{dx} e^{-Ae^{-x}} = Ae^{-x}e^{-Ae^{-x}} = Ae^{-x-Ae^{-x}}.
    \end{equation*}
    Therefore, we may deduce that
    \begin{align*}
        P(Y=k,V=k) &= \left[\frac{1}{1 + \sum_{i\ne k} \exp\{-\min\{\log p^k/p^i,\log q^k/q^i\}\}}e^{-Ae^{-g}}\right]_{-\infty}^\infty \\ 
        &=\frac{1}{1 + \sum_{i\ne k} \exp\{-\min\{\log p^k/p^i,\log q^k/q^i\}\}}.
    \end{align*}
    Since
    \begin{align*}
        \exp\{-\min\{\log p^k/p^i, \log q^k/ q^i\}\} &= \max \{\exp\{-\log p^k/p^i\},\exp\{-\log q^k/q^i\}\} \\ &=\max\{p^i/p^k, q^i/q^k\},
    \end{align*}
    we deduce
    \begin{equation*}
        P(Y=k, V=k) = \frac{1}{1 + \sum_{i\ne k} \max\{p^i/p^k, q^i/q^k\}}.
    \end{equation*}
    as claimed.
\end{proof}
\section{Proofs relating to Gumbel recycling procedure}
Note that in the following proofs, we denote for notational simplicity that for a random vector $x$, where each element of $x$ is independently sampled according to a Gumbel distribution, $x^k \sim G(0,1)$, $p(x) = \prod_{k} G(x^{k};0,1) = G(x; 0,1)$.

\subsection{Proof for Theorem \ref{theorem:gumbel_recycling_validity}}\label{appendix:gumbel_recylcing_validity}
\begin{unlabelled_theorem}
    Consider a sequence of tokens $Y_{1:T}$ generated auto-regressively according to the following update rule, where $k := \argmax_j (g_{t}^j + \log h_{t}^j)$, $Q(\cdot)$ is the quantile function for the $G(0,1)$ distribution and $\pi_{\theta}(\cdot)$ is a language model:
    \begin{align*}
        g_1 &\sim G(0,1)\\
        h_{t+1} &= \pi_{\theta}(X, Y_{1:t})\\
        g_{t+1}^k \mid g_t, h_t &\sim G(0,1)\\
        g_{t+1}^i \mid g_t, h_t &= Q\left(\frac{Q^{-1}(g_{t}^{i})}{Q^{-1}(g_{t}^{k} + \log h_{t}^{k} - \log h_{t}^i)}\right), \quad \text{for }i \neq k\\
    Y_{t+1} &= \argmax_j \left(\log h_{t+1}^j + g^j_{t+1}\right)
    \end{align*}

With this update procedure, the probability distribution over a given token conditioned on preceding tokens is 
\begin{equation*}
    p(Y_{t+1} \mid X, Y_{1:t}) = \text{Cat}(Y_{t+1}; h_{t+1})
\end{equation*}
\end{unlabelled_theorem}
\begin{proof}
We proceed through proof by induction. We make two assumptions that following expressions hold for $t$, then prove that the expressions hold for $t+1$ under those assumptions (and that they hold for the base case). The assumptions are that:
\begin{assumption}
    \begin{equation*}
    p(Y_{t} \mid X, Y_{1:t-1}) = Cat(Y_{t};h_{t+1})
    \end{equation*}
\end{assumption}
\begin{assumption}
    \begin{equation*}
    p(g_{t} \mid X, Y_{1:t-1}) =G(g_{t};0,1)
    \end{equation*}
\end{assumption}
N.B that in the base cases ($p(Y_1 \mid X)$ and $p(g_1 \mid X)$), the expressions are trivially valid by the Gumbel reparameterization trick and by construction of $g_1$ respectively. 

Now, let's prove that the expressions hold for $t+1$. We will first prove the following: \begin{equation*}
    p(g_{t+1} \mid X, Y_{1:t}) =G(g_{t+1};0,1)
\end{equation*}
by first considering the probability $p(g_{t+1}^i< a \mid X, Y_{1:t})$:
\begin{align*}
    p(g_{t+1}^i< a \mid X, Y_{1:t}) &= \int p(g_{t+1}^i < a, g_t \mid X, Y_{1:t})\,dg_{t}\\
    &= \int p(g_{t+1}^i < a \mid X, Y_{1:t}, g_t) p(g_{t} \mid X, Y_{1:t-1}, Y_{t})\,dg_{t} \\
    &= \int p(g_{t+1}^i < a \mid X, Y_{1:t}, g_t)\frac{p(Y_{t} \mid X, Y_{1:t-1}, g_t)p(g_{t} \mid X, Y_{1:t-1})}{p(Y_{t} \mid X, Y_{1:t-1})} \,dg_{t} \\
    &= \frac{1}{p(Y_{t} \mid X, Y_{1:t-1})}\int p(g_{t+1}^i < a \mid h_t, Y_t, g_t)p(Y_{t} \mid h_t, g_t)G(g_t;0,1) \,dg_{t}
\end{align*}
Firstly, consider the case where $Y_t = i$. In this case, we know that $g_{t+1}^i$ is newly sampled from $G(0,1)$. Therefore, using the Gumbel reparameterization trick for the last step, we have that:
\begin{align*}
    p(g_{t+1}^i< a \mid X, Y_{1:t}) &= \frac{1}{h_{t}^{i}}\int p(g_{t+1}^i < a \mid h_t, Y_t, g_t)p(Y_{t} \mid h_t, g_t)G(g_t;0,1) \,dg_{t} \\
     &= \frac{1}{h_{t}^{i}}\int Q^{-1}(a)p(Y_{t} \mid h_t, g_t)G(g_t;0,1) \,dg_{t} \\
    &= \frac{h_{t}^{i}}{h_{t}^{i}}Q^{-1}(a) = Q^{-1}(a)
\end{align*}
Turning our attention to the case where $Y_t = j\neq i$ 
\begin{align*}
    p(g_{t+1}^i< a \mid X, Y_{1:t}) &= \frac{1}{p(Y_{t} =j \mid X, Y_{1:t-1})}\int p(g_{t+1}^i < a \mid h_t, Y_t = j, g_t)p(Y_{t} = j \mid h_t, g_t)G(g_t;0,1) \,dg_{t}\\
    &= \frac{1}{h_{t}^{j}}\int p(g_{t+1}^i < a \mid h_t, Y_t = j, g_t)p(Y_{t} =j  \mid h_t, g_t)G(g_t;0,1) \,dg_{t}
\end{align*}
We simplify notation by denoting the following events:
\begin{align*}
    E' &= \left\{Q\left(\frac{Q^{-1}(g_{t}^{i})}{Q^{-1}(g_{t}^{j} + \log h_{t}^{j} - \log h_{t}^i)}\right) < a\right\} \\ 
    E_p &= \left\{g_{t}^p + \log h_{t}^p < g_{t}^j + \log h_{t}^j\right\}
\end{align*}
Now, we can rewrite the following probabilities using these definitions:
\begin{align*}
        p(g_{t+1}^i < a \mid h_t, Y_t = j, g_t) &= \mathbf{1}_{E'(g_{t})} \\
    p(Y_{t} = j \mid h_t, g_t) &= \left(\prod_{p \neq j}\mathbf{1}_{E_p(g_{t})}\right) \\
    p(g_{t+1}^i< a \mid X, Y_{1:t}) &= \frac{1}{h_{t}^j}\bigintsss \mathbf{1}_{E'(g_{t})} \left(\prod_{p \neq j}\mathbf{1}_{E_p(g_{t})}\right) G(g_t;0,1)\,dg_{t}
\end{align*}
Since $Q^{-1}(x)$ is a monotonic function, $E_i$ is equivalently defined as: 
\begin{equation*}
    E_{i} = \left\{Q^{-1}\left(g_{t}^i\right) < Q^{-1}\left(g_{t}^j + \log h_{t}^j - \log h_{t}^i\right) \right\}
\end{equation*}
Additionally, $E'$ can be rewritten as
\begin{equation*}
    E' = \left\{Q^{-1}\left(g_{t}^i\right) < Q^{-1}(a)Q^{-1}\left(g_{t}^j + \log h_{t}^j - \log h_{t}^i\right) \right\}
\end{equation*}
Since $Q^{-1}(a) \in  [0,1]$, the occurrence of $E'$ is a sufficient condition for the occurrence of $E_i$. Therefore, we can simplify the integral to: 
\begin{equation*}
    p(g_{t+1}^i< a \mid X, Y_{1:t}) = \frac{1}{h_{t}^j}\bigintsss \mathbf{1}_{E'(g_{t})} \left(\prod_{p \neq i,j}\mathbf{1}_{E_p(g_{t})}\right) G(g_t;0,1)\,dg_{t}
\end{equation*}
The CDF of the Gumbel distribution can be written $Q^{-1}(x) = e^{-e^{-x}}$, so $Q^{-1}(x + c) = \left(Q^{-1}(x)\right)^{e^{-c}}$. With this fact and application of the monotonic transformation $Q(\cdot)$, we can rewrite the events :
\begin{align*}
    E' &= \left\{Q^{-1}\left(g_{t}^i\right)Q^{-1}\left(g_{t}^j\right)^{-\frac{h_{t}^i}{h_{t}^j}}  < Q^{-1}(a)\right\} \\
    E_p &= \left\{Q^{-1}\left(g_{t}^p\right)  < Q^{-1}\left(g_{t}^j\right)^{\frac{h_{t}^p}{h_{t}^j}}\right\}
\end{align*}
We now use the fact that $Q^{-1}(g^i_{t}) \coloneq U_{t}^i \sim \mathcal{U}[0,1]\:\:\forall i$ to rewrite the events like so:
\begin{align*}
    E' &= \left\{U_{t}^i\left(U_{t}^j\right)^{-\frac{h_{t}^i}{h_{t}^j}}  < Q^{-1}(a)\right\} \\
    E_p &= \left\{U_{t}^p  < \left(U_t^j\right)^{\frac{h_{t}^p}{h_{t}^j}}\right\}
\end{align*}
In conjunction with lemma \ref{lem:product_of_uniforms}, this gives us the desired cumulative density function:
\begin{equation*}
    p(g_{t+1}^i< a \mid X, Y_{1:t}) =  \frac{1}{h_{t}^j}(h_t^j)Q^{-1}(a) = Q^{-1}(a)
\end{equation*}
Since the cumulative density function in both cases ($Y_t = i$ and $Y_t \neq i$) is $Q^{-1}(a)$, we have that, under our initial assumptions, $p(g_{t+1}\mid X, Y_{1:t}) = G(g_{t+1};0,1)$.

Finally, we then introduce and marginalise over the Gumbel noise vector at the previous timestep for the distribution over $Y_{t+1}$, where the final step follows from the Gumbel reparameterization trick:
\begin{align*}
    p(Y_{t+1} \mid X, Y_{1:t}) &= \int p(Y_{t+1}, g_{t+1} \mid X, Y_{1:t})\, dg_{t+1} \\
    &= \int p(Y_{t+1} \mid X, Y_{1:t}, g_{t+1})p(g_{t+1} \mid X, Y_{1:t})\, dg_{t+1} \\ 
    &= \int p(Y_{t+1} \mid h_{t+1}, g_{t+1})G(g_{t+1};0,1)\, dg_{t+1} \\ 
    &= \text{Cat}(Y_{t+1};h_{t+1})
\end{align*}
Therefore, since the expressions are valid for the base case of $t=1$, and we have shown them to be valid for $t+1$ if assumptions 1 and 2 hold, they must be true for all $t$, by induction.
\end{proof}

\subsection{Statement and Proof of lemma \ref{lem:product_of_uniforms}}
\begin{lemma}
\label{lem:product_of_uniforms}
$X$, $Y$ and $Z_{1:N}$ are random variables each independently drawn from $\mathcal{U}[0,1]$. $A$, $B$, $C_{1:N}$ and $D$ are positive constants between 0 and 1, and $A + B + \sum_n C_n = 1$. Defining the events $E^* = \left\{XY^{-\frac{A}{B}} < D\right\}$ and  $E_n = \left\{Z_n < Y^{\frac{C_n}{B}}\right\}$, the probability of the intersection of events is given by:
\begin{equation*}
    P\left( E^* \cap \bigcap_{n = 1}^{N} E_n \right) = BD
\end{equation*}
\end{lemma}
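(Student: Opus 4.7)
The plan is to condition on $Y$ and exploit the mutual independence of $X$, $Y$, $Z_1,\dots,Z_N$. Given $Y=y$, the events $E^*$ and $E_1,\dots,E_N$ depend only on $X$ and $Z_1,\dots,Z_N$ respectively, and these random variables are independent of one another, so the conditional probability of the intersection factorizes as a product of the individual conditional probabilities.

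For each factor, I would evaluate the conditional probabilities using the fact that for $U\sim\mathcal{U}[0,1]$ and $c\in[0,1]$, $P(U<c)=c$. Concretely, I would first argue that $Dy^{A/B}\leq 1$ for all $y\in[0,1]$ (this follows from $D\leq 1$, $y\leq 1$, and $A/B>0$), so that $P(E^*\mid Y=y)=P(X<Dy^{A/B})=Dy^{A/B}$; similarly $P(E_n\mid Y=y)=P(Z_n<y^{C_n/B})=y^{C_n/B}$, since $y^{C_n/B}\in[0,1]$. Multiplying these together gives the conditional probability of the intersection as
\begin{equation*}
P\!\left(E^*\cap\bigcap_{n=1}^N E_n\;\Big|\;Y=y\right)=D\,y^{(A+\sum_n C_n)/B}.
\end{equation*}

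Using the constraint $A+B+\sum_n C_n=1$, the exponent simplifies to $(1-B)/B$. Integrating against the uniform density of $Y$ on $[0,1]$ then yields
\begin{equation*}
P\!\left(E^*\cap\bigcap_{n=1}^N E_n\right)=\int_0^1 D\,y^{(1-B)/B}\,dy=D\cdot\frac{1}{1/B}=BD,
\end{equation*}
which is the claim.

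There is no real obstacle here; the only subtlety worth making explicit is the verification that $Dy^{A/B}$ and $y^{C_n/B}$ lie in $[0,1]$ over the range of integration, so that the uniform-CDF identity applies without truncation. Everything else is a one-line integral and a direct application of the given constraint on the constants.
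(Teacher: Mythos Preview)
Your argument is correct and follows essentially the same route as the paper: condition on $Y$, factorize by independence of $X$ and the $Z_n$, compute each conditional probability via the uniform CDF, and integrate the resulting power of $y$ using the constraint $A+B+\sum_n C_n=1$. Your explicit check that $Dy^{A/B}$ and $y^{C_n/B}$ lie in $[0,1]$ is a nice point that the paper leaves implicit.
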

\begin{proof}
We can write down the following probabilities that are conditional on $Y$: 
\begin{align*}
    P( E^* | Y ) &= P\left( X \leq D Y^{\frac{A}{B}} \right) = D Y^{\frac{A}{B}} \\
    P( E_n | Y ) &= P\left( Z_n \leq Y^{\frac{C_n}{B}} \right) = Y^{\frac{C_n}{B}}
\end{align*}

Therefore, the probability of the complement is given by integrating the product of these quantities over $p(y)$:
\begin{align*}
    P\left( E^* \cap \bigcap_{n = 1}^{N} E_n \right) &= \int_{0}^{1} P( E^* | Y ) \prod_{n=1}^{N} P( E_n | Y ) dY \\
    &= \int_{0}^{1} \left( D Y^{\frac{A}{B}} \right) \prod_{n=1}^{N} \left(Y^{\frac{C_n}{B}} \right) dY \\
    &= \int_{0}^{1} \left( D Y^{\frac{A + \sum_n C_n}{B}} \right) dY \\
    &= D\frac{1}{\frac{A + \sum_n C_n}{B} + 1} \\
    &= D\frac{1}{\left(\frac{A + \sum_n C_n + B}{B}\right)} = BD
\end{align*}
\end{proof}
\section{Proof of unbiased estimator for $p_{repeat}$}\label{appendix:unbiased_p_repeat}
\begin{unlabelled_claim}
Let $p$ denote the probability of some Bernoulli event. an unbiased estimator of $p$ given by a finite set $N$ of samples $Z_{1:N}$ from the distribution is given by:
\begin{equation*}
\hat{p} = \frac{1}{N} \sum_{i=1}^N Z_i
\end{equation*}
An unbiased estimator of $p_{repeat} = p^2 + (1-p)^2$ is:
\begin{equation*}
\frac{N}{N-1} (\hat{p}^2 + (1-\hat{p})^2) - \frac{1}{N-1}
\end{equation*}
\end{unlabelled_claim}
\begin{proof}
Calculate the expectation of \(\hat{p}^2\):
\begin{equation*}
\E (\hat{p}^2) = E \left( \left( \frac{1}{N} \sum_{i=1}^N Z_i \right)^2 \right)
\end{equation*}

Expand the square inside the expectation:
\begin{equation*}
\E (\hat{p}^2) = \frac{1}{N^2} \E \left( \sum_{i=1}^N Z_i^2 + \sum_{i \neq j} Z_i Z_j \right)
\end{equation*}

Since \( Z_i^2 = Z_i \), and by linearity of expectation:
\begin{equation*}
\E (\hat{p}^2) = \frac{1}{N^2} \left( Np + N(N-1)p^2 \right)
\end{equation*}

Simplify the expression:
\begin{equation*}
\E(\hat{p}^2) = \frac{Np + N^2 p^2 - N p^2}{N^2} = \frac{p + (N-1)p^2}{N}
\end{equation*}

Using this result, we have the following:
\begin{align*}
\E(\hat{p}^2 + (1-\hat{p})^2) &= E\left( 2\hat{p}^2 - 2\hat{p} + 1 \right) \\
&= 2 E(\hat{p}^2) - 2 E(\hat{p}) + 1 \\
&= 2 \left( \frac{1}{N} p + \frac{N-1}{N} p^2 \right) - 2p + 1 \\
&= \frac{1}{N} \left( (N-1)(2p^2 - 2p + 1) + 1\right) \\
&= \frac{N-1}{N} \left( (2p^2 - 2p + 1) + \frac{1}{N-1}\right)\\
&= \frac{N-1}{N} p_{repeat} + \frac{1}{N}\\
\end{align*}
Therefore, we can debias the naive estimator using the following expression:
\begin{equation*}
\frac{N}{N-1} (\hat{p}^2 + (1-\hat{p})^2) - \frac{1}{N-1}
\end{equation*}
\end{proof}
\section{Justification for ensembling procedure} \label{appendix:kl_divergence_proof}
\begin{unlabelled_theorem}
Suppose we have a set of categorical distributions  \(\{P_i\}_{i=1}^n\), define $Q^*$ as the distribution minimizing the average forward Kullback-Leibler divergence to each  \(\{P_i\}_{i=1}^n\):
\begin{align}
Q^* = \arg\min_Q \frac{1}{n} \sum_{i=1}^n D_{\text{KL}}(Q \| P_i)
\end{align}
then $Q^*(x)$ can be expressed as
\begin{align}
Q^*(x) = \frac{1}{Z} \prod_{i=1}^n P_i(x)^{\frac{1}{n}}
\end{align}
where Z is the normalisation constant to ensure $Q^*$ defines a valid probability distribution function
\[
Z = \sum_{x} \prod_{i=1}^n P_i(x)^{\frac{1}{n}}
\]
\end{unlabelled_theorem}
\begin{proof}
Expanding the KL divergence
\[
\frac{1}{n} \sum_{i=1}^n D_{\text{KL}}(Q \| P_i) = \frac{1}{n} \sum_{i=1}^n \sum_{x} Q(x) \log \frac{Q(x)}{P_i(x)}
\]
Changing the order of sums, this can be re-expressed as
\[
\frac{1}{n} \sum_{i=1}^n D_{\text{KL}}(Q \| P_i) = \frac{1}{n} \sum_{x} Q(x) \log \frac{Q(x)^n}{\prod_{i=1}^n P_i(x)}= \sum_{x} Q(x) \log \frac{Q(x)}{\prod_{i=1}^n P_i(x)^{\frac{1}{n}}}
\]
Introducing the normalisation constant Z
\[
\frac{1}{n} \sum_{i=1}^n D_{\text{KL}}(Q \| P_i) = \sum_{x} Q(x) \log \frac{\frac{1}{Z}Q(x)}{\frac{1}{Z}\prod_{i=1}^n P_i(x)^{\frac{1}{n}}}=\frac{1}{n} \sum_{i=1}^n D_{\text{KL}}(Q \| P_i)
\]
separating the Z in the numerator
\[
\frac{1}{n} \sum_{i=1}^n D_{\text{KL}}(Q \| P_i) =
\sum_{x} Q(x) \log \frac{Q(x)}{\frac{1}{Z}\prod_{i=1}^n P_i(x)^{\frac{1}{n}}}+\log\frac{1}{Z}
\]
thus
\[
\frac{1}{n} \sum_{i=1}^n D_{\text{KL}}(Q \| P_i) =
D_{\text{KL}}(Q \| {\frac{1}{Z}\prod_{i=1}^n P_i^{\frac{1}{n}}})+\log\frac{1}{Z}
\]
this will be minimised when the right-hand side KL is equal to zero which occurs at $Q^*(x) = \frac{1}{Z} \prod_{i=1}^n P_i(x)^{\frac{1}{n}}$
\end{proof}
\section{Experimental details}\label{appendix:Experiment_detail}

For all experiments, answer generation is done using language models quantised to bfloat16 \citep{bfloat16}. Chat and instruction-tuned models are prompted using default templates whereas base models (mistral, llama3-base) are prompted with a template consisting of a single in-context example to help steer away from off-topic answers. The addition of this in-context example was found to not materially impact the efficacity of Gumbel sampling but impact the quality of responses.

To avoid excessive experiment run-times, we restrict generated responses to a maximum length after which we interrupt text generation. This limit was set to 50 new tokens for semantic similarity experiments and to 200 new tokens for stylistic similarity experiments. We ran small-scale experiments with larger maximum response length and did not find material evidence of experimental findings being impacted by this response truncation.

When measuring semantic similarity between responses we measure consistency between responses rather than response-question pairs with any follow-up questions or answers hallucinated by the language model removed programmatically.

\section{Stylistic consistency prompts}\label{appendix:cls_prompts}
\subsection{Aleatoric-list}
\lstinputlisting[basicstyle=\footnotesize, escapeinside={(*@}{@*)}, caption={Zero-shot classification prompt for whether model-generated response contains bulletpoints. Placeholders for question-specific content are shown in \textcolor{red}{red}.}, label={lst:cls_itemisation}]{prompts/stylistic_classifiers/itemisation.txt}

\lstinputlisting[basicstyle=\footnotesize, escapeinside={(*@}{@*)}, caption={Zero-shot classification prompt for whether model-generated response is terse. Placeholders for question-specific content are shown in \textcolor{red}{red}.}, label={lst:cls_tersness}]{prompts/stylistic_classifiers/tersness.txt}

\subsection{Code-Alpaca}

\lstinputlisting[basicstyle=\footnotesize, escapeinside={(*@}{@*)}, caption={Zero-shot classification prompt for determining programming language of model-generated response. Placeholders for question-specific content are shown in \textcolor{red}{red}.}, label={lst:cls_language}]{prompts/stylistic_classifiers/language.txt}

\lstinputlisting[basicstyle=\footnotesize, escapeinside={(*@}{@*)}, caption={Zero-shot classification prompt for determining if model-generated response contains comments. Placeholders for question-specific content are shown in \textcolor{red}{red}.}, label={lst:cls_comments}]{prompts/stylistic_classifiers/contains_comments.txt}
 
 \lstinputlisting[basicstyle=\footnotesize, escapeinside={(*@}{@*)}, caption={Zero-shot classification prompt for determining if model-generated response contains code snippet. Placeholders for question-specific content are shown in \textcolor{red}{red}.}, label={lst:cls_snippet}]{prompts/stylistic_classifiers/contains_snippet.txt}

\section{Example responses and semantic similarity} \label{appendix:sampling_examples}

\subsection{Llama3-base-7B without gumbel consistency sampling at T=0.8}

\begin{table}[H]
\centering
\small
\caption{Examples of semantic similarity between responses for various questions}
\begin{tabular}{p{0.25\textwidth}p{0.28\textwidth}p{0.28\textwidth}c}
\toprule
\multicolumn{1}{c}{\textbf{Question}} & \multicolumn{1}{c}{\textbf{Response 1}} & \multicolumn{1}{c}{\textbf{Response 2}} & \multicolumn{1}{c}{\textbf{Similarity}} \\
\multicolumn{1}{c}{} & \multicolumn{1}{c}{} & \multicolumn{1}{c}{} & \multicolumn{1}{c}{\textbf{(\%)}} \\
\midrule
Break down the application of word embeddings in Natural Language Processing / Illuminate the use of word embeddings in Natural Language Processing & Natural Language Processing(NLP) is a branch of Artificial Intelligence (AI) which deals with the interaction between computers and human languages. Word2vec is the most popular class of word embeddin... & It is used to represent words and their context with numerical values. & 68.0 \\
\midrule
Expound on the function of a computer motherboard. / Illustrate the function of a computer motherboard. & The motherboard is the framework of the computer. It houses the CPU, RAM, and other components. & The computer’s motherboard serves a number of functions. It directs information from the CPU to the memory, makes sure the proper voltage levels are reached, and directs the data flow in and out of th... & 80.5 \\
\midrule
Decode the abbreviation DNA. / Share the meaning behind DNA. & DNA is an abbreviation for deoxyribonucleic acid. & DNA is also known as Deoxyribonucleic Acid. It is found in every single cell in the body and each cell contains the complete set of genetic information. It is the genetic blueprint that determines the... & 78.0 \\
\midrule

\end{tabular}
\label{tab:semantic_similarity_examples}
\end{table}

\subsection{Llama3-base-7B with gumbel consistency sampling (recycling) at T=0.8}

\begin{table}[H]
\centering
\small
\caption{Examples of semantic similarity between responses for various questions}
\begin{tabular}{p{0.25\textwidth}p{0.28\textwidth}p{0.28\textwidth}c}
\toprule
\multicolumn{1}{c}{\textbf{Question}} & \multicolumn{1}{c}{\textbf{Response 1}} & \multicolumn{1}{c}{\textbf{Response 2}} & \multicolumn{1}{c}{\textbf{Similarity}} \\
\multicolumn{1}{c}{} & \multicolumn{1}{c}{} & \multicolumn{1}{c}{} & \multicolumn{1}{c}{\textbf{(\%)}} \\
\midrule
Break down the application of word embeddings in Natural Language Processing / Illuminate the use of word embeddings in Natural Language Processing & Embedding is the encoding of words in the context with a real-valued vector of a certain dimension. The low dimension helps to reduce the computational complexity of the models. Embedding can be learn... & Embedding is the encoding of words in the context of a sentence with a real-valued vector. The use of word embeddings helps to solve the problem of ‘Word sense disambiguation’. & 85.0 \\
\midrule
Expound on the function of a computer motherboard. / Illustrate the function of a computer motherboard. & The motherboard of a computer is responsible for providing a workspace where all the hardware components can be connected. The motherboard also controls the communication between the various component... & The motherboard of a computer is responsible for providing communication between all the various components. The motherboard also controls basic functions such as the timing of the hardware, start-up ... & 90.0 \\
\midrule
Decode the abbreviation DNA. / Share the meaning behind DNA. & DNA stands for deoxyribonucleic acid & DNA means Deoxyribonucleic acid & 98.0 \\
\midrule

\end{tabular}
\label{tab:semantic_similarity_examples}
\end{table}

\subsection{Llama3-instruct-7B without gumbel consistency sampling at T=0.8}

\begin{table}[H]
\centering
\small
\caption{Examples of semantic similarity between responses for various questions}
\begin{tabular}{p{0.25\textwidth}p{0.28\textwidth}p{0.28\textwidth}c}
\toprule
\multicolumn{1}{c}{\textbf{Question}} & \multicolumn{1}{c}{\textbf{Response 1}} & \multicolumn{1}{c}{\textbf{Response 2}} & \multicolumn{1}{c}{\textbf{Similarity}} \\
\multicolumn{1}{c}{} & \multicolumn{1}{c}{} & \multicolumn{1}{c}{} & \multicolumn{1}{c}{\textbf{(\%)}} \\
\midrule
Break down the application of word embeddings in Natural Language Processing / Illuminate the use of word embeddings in Natural Language Processing & Word embeddings are a fundamental component in Natural Language Processing (NLP) that allow machines to understand the nuances of language by mapping words to vectors in a high-dimen... & The wonders of word embeddings! Word embeddings are a fundamental concept in Natural Language Processing (NLP) that revolutionized the way we represent words as vect... & 91.0 \\
\midrule
Expound on the function of a computer motherboard. / Illustrate the function of a computer motherboard. & The motherboard is the main circuit board of a computer, connecting and supporting all the hardware components of the system. It serves as a communication highway, allowing different... & The computer motherboard is the main circuit board of a computer that connects all the hardware components together. It is responsible for facilitating communication and data transfe... & 91.0 \\
\midrule
Decode the abbreviation DNA. / Share the meaning behind DNA. &  A classic one! The abbreviation DNA stands for: D - Deoxy N - Nucleic A - Acid. In other words, DNA is Deoxyribose N... & What an fascinating topic! DNA (Deoxyribonucleic acid) is a molecule that contains the genetic instructions used in the development and function of all living organisms. It's often r... & 78.0 \\
\midrule

\end{tabular}
\label{tab:semantic_similarity_examples}
\end{table}

\subsection{Llama3-instruct-7B with gumbel consistency sampling (recycling) at T=0.8}

\begin{table}[H]
\centering
\small
\caption{Examples of semantic similarity between responses for various questions}
\begin{tabular}{p{0.25\textwidth}p{0.28\textwidth}p{0.28\textwidth}c}
\toprule
\multicolumn{1}{c}{\textbf{Question}} & \multicolumn{1}{c}{\textbf{Response 1}} & \multicolumn{1}{c}{\textbf{Response 2}} & \multicolumn{1}{c}{\textbf{Similarity}} \\
\multicolumn{1}{c}{} & \multicolumn{1}{c}{} & \multicolumn{1}{c}{} & \multicolumn{1}{c}{\textbf{(\%)}} \\
\midrule
Break down the application of word embeddings in Natural Language Processing / Illuminate the use of word embeddings in Natural Language Processing & Embedding is the encoding of words in the context with a real-valued vector of a certain dimension. The low dimension helps to reduce the computational complexity of the models. Embedding can be learn... & Embedding is the encoding of words in the context of a sentence with a real-valued vector. The use of word embeddings helps to solve the problem of ‘Word sense disambiguation’. & 85.0 \\
\midrule
Expound on the function of a computer motherboard. / Illustrate the function of a computer motherboard. & The motherboard of a computer is responsible for providing a workspace where all the hardware components can be connected. The motherboard also controls the communication between the various component... & The motherboard of a computer is responsible for providing communication between all the various components. The motherboard also controls basic functions such as the timing of the hardware, start-up ... & 90.0 \\
\midrule
Decode the abbreviation DNA. / Share the meaning behind DNA. & DNA stands for deoxyribonucleic acid & DNA means Deoxyribonucleic acid & 98.0 \\
\midrule

\end{tabular}
\label{tab:semantic_similarity_examples}
\end{table}

\section{Evaluation of semantic similarity using different metric choices}\label{appendix:OtherMetricsSimilarity}

\subsection{Jaccard similarity}

In Table \ref{table:jaccard}, we reproduce mean semantic similarity results quoted in Section \ref{section:semantic_similarity} using Jaccard similarity where we measure Jaccard similarity on the set of tokens produced by each model's own associated tokenizer.

\begin{table*}[h]
    \centering

    \begin{tabular}{llcccc}
        \toprule
        \textbf{Model} & \textbf{Sampler} & \textbf{Without Ensembling} & \textbf{With Ensembling} \\
        \midrule
        \multirow{3}{*}{Llama2 Chat}
                            & IS & 0.351$\pm$0.002  & 0.385$\pm$0.011 \\
                            & GCS & 0.420$\pm$0.006  & 0.495$\pm$0.005 \\
                            & GCSwR & \textbf{0.444$\pm$0.005}  & \textbf{0.521$\pm$0.007} \\
    \midrule
    \multirow{3}{*}{Mistral}
                            & IS & 0.115$\pm$0.004  & 0.119$\pm$0.005 \\
                            & GCS & 0.266$\pm$0.010  & 0.340$\pm$0.020 \\
                            & GCSwR & \textbf{0.335$\pm$0.013}  & \textbf{0.393$\pm$0.008} \\
    \midrule
    \multirow{3}{*}{Llama3 Instruct}
                            & IS & 0.320$\pm$0.011  & 0.346$\pm$0.000 \\
                            & GCS & 0.365$\pm$0.017  & 0.442$\pm$0.004 \\
                            & GCSwR & \textbf{0.410$\pm$0.003}  & \textbf{0.479$\pm$0.008} \\
    \midrule
    \multirow{3}{*}{Llama3 Base}
                            & IS & 0.086$\pm$0.001  & 0.096$\pm$0.009 \\
                            & GCS & 0.224$\pm$0.015  & 0.281$\pm$0.009 \\
                            & GCSwR & \textbf{0.314$\pm$0.012}  & \textbf{0.371$\pm$0.010} \\

        \bottomrule
    \end{tabular}
    \captionsetup{width=.9\linewidth}
    \caption{Model results by sampler type, for the Jaccard similarity, with and without our ensembling approach. Scores shown as mean$\pm$std.err with std.err obtained from 3 independent runs. Bold indicates highest scores for each model in both ensembling categories.}
    \label{table:jaccard}
\end{table*}

\subsection{Sentencebert}

In Table \ref{table:sentencebert}, we reproduce mean semantic similarity results quoted in Section \ref{section:semantic_similarity} using the \texttt{all-mpnet-base-v2} model recommended by the popular popular \texttt{sentencetransformer} repository \citep{sentencebert}.

\begin{table*}[t]
    \centering
    \begin{tabular}{llcccc}
        \toprule
        \textbf{Model} & \textbf{Sampler} & \textbf{Without Ensembling} & \textbf{With Ensembling} \\
        \midrule
        \multirow{3}{*}{Llama2 Chat}
                            & IS & 0.758$\pm$0.006  & 0.772$\pm$0.002 \\
                            & GCS & 0.799$\pm$0.001  & 0.827$\pm$0.003 \\
                            & GCSwR & \textbf{0.801$\pm$0.005}  & \textbf{0.834$\pm$0.005} \\
    \midrule
    \multirow{3}{*}{Mistral}
                            & IS & 0.463$\pm$0.006  & 0.461$\pm$0.025 \\
                            & GCS & 0.600$\pm$0.006  & 0.647$\pm$0.022 \\
                            & GCSwR & \textbf{0.630$\pm$0.017}  & \textbf{0.664$\pm$0.016} \\
    \midrule
    \multirow{3}{*}{Llama3 Instruct}
                            & IS & 0.778$\pm$0.002  & 0.794$\pm$0.001 \\
                            & GCS & 0.795$\pm$0.008  & 0.827$\pm$0.005 \\
                            & GCSwR & \textbf{0.800$\pm$0.004}  & \textbf{0.832$\pm$0.003} \\
    \midrule
    \multirow{3}{*}{Llama3 Base}
                            & IS & 0.429$\pm$0.019  & 0.443$\pm$0.014 \\
                            & GCS & 0.558$\pm$0.013  & 0.593$\pm$0.012 \\
                            & GCSwR & \textbf{0.597$\pm$0.009}  & \textbf{0.641$\pm$0.009} \\

    \bottomrule
    \end{tabular}
    \captionsetup{width=.9\linewidth}
    \caption{Model results by sampler type, for the \texttt{all-mpnet-base-v2} model, with and without our ensembling approach. Scores shown as mean$\pm$std.err with std.err obtained from 3 independent runs. Bold indicates highest scores for each model in both ensembling categories.}
    \label{table:sentencebert}
\end{table*}

\section{Python implementation}\label{python_implementation}

We provide below a self-contained reference Python implementation of our GCSwR algorithm.
\lstset{
  language=Python,
  basicstyle=\ttfamily\footnotesize,
  keywordstyle=\color{blue}\bfseries,
  commentstyle=\color{green},
  stringstyle=\color{red},
  numbers=left,
  numberstyle=\tiny,
  stepnumber=1,
  numbersep=5pt,
  frame=single,
  breaklines=true,
}

\lstinputlisting[basicstyle=\footnotesize, label={lst:implementation}]{prompts/reference_implementation.txt}

\end{document}